\documentclass[letterpaper]{article}

\usepackage{aaai2026}
\usepackage{times}
\usepackage{helvet}
\usepackage{courier}
\usepackage[hyphens]{url}
\usepackage{graphicx}
\usepackage{natbib}
\usepackage{caption}
\usepackage{amsmath}
\usepackage{amssymb}
\usepackage{amsfonts}
\usepackage{amsthm}
\usepackage{mathtools}
\usepackage{dsfont}
\newcommand{\normalem}{}
\newcommand{\ULforem}{}
\usepackage{booktabs}
\usepackage{multirow}
\usepackage{array}
\usepackage{colortbl}
\usepackage{xcolor}
\usepackage{subcaption}
\usepackage{enumitem}
\usepackage{multicol}

\usepackage[linesnumbered,ruled,vlined]{algorithm2e}
\SetKwInput{KwInput}{Input}
\SetKwInput{KwOutput}{Output}
\SetKwRepeat{Do}{do}{while}

\SetCommentSty{mycommfont}

\DeclareMathOperator*{\argmax}{arg\,max}
\DeclareMathOperator*{\argmin}{arg\,min}

\theoremstyle{plain}
\newtheorem{theorem}{Theorem}

\theoremstyle{definition}

\definecolor{darkred}{rgb}{0.55, 0.0, 0.0}
\definecolor{royalblue}{rgb}{0.25, 0.41, 0.88}
\definecolor{orange}{rgb}{0.85, 0.45, 0.0}
\definecolor{gray}{rgb}{0.5, 0.5, 0.5}

\title{Towards Effective Federated Multimodal Graph Learning via Navigating Multifaceted Heterogeneity}

\author{
    Yinlin Zhu\textsuperscript{1},
    Di Wu\textsuperscript{1}\thanks{Corresponding author.},
    Yi Zhang\textsuperscript{2},
    Xunkai Li\textsuperscript{3},\\
    Wang Luo\textsuperscript{1},
    Wei-Jin Huang\textsuperscript{1},
    Miao Hu\textsuperscript{1},
    Guocong Quan\textsuperscript{1}
}
\affiliations{

    Sun Yat-sen University, Guangzhou, China\textsuperscript{\rm 1}\\
    Shandong University, Weihai, China\textsuperscript{\rm 2}\\
    Beijing Institute of Technology, Beijing, China\textsuperscript{\rm 3}\\

    \{zhuylin27,\ luow69, huangwj235\}@mail2.sysu.edu.cn, 
    hanziyu@mail.sdu.edu.cn, cs.xunkai.li@gmail.com,\\
    \{wudi27,\ humiao5,\ quangc\}@mail.sysu.edu.cn
    
}

\begin{document}
\maketitle

\begin{abstract}
    Multimodal-attributed graphs (MAGs), where nodes carry heterogeneous semantic content across multiple modalities while edges encode relational dependencies, have been widely adopted across diverse domains. Federated multimodal graph learning (FMGL) extends federated graph learning (FGL) to MAGs, enabling collaborative optimization across decentralized MAGs without exposing raw data. However, naively applying existing FGL methods to FMGL is insufficient, as they fail to navigate the \textit{multifaceted heterogeneity} inherent in decentralized MAGs, including task heterogeneity across diverse client objectives, modality heterogeneity from discrepant modality quality and semantic domains, and topology heterogeneity arising from divergent topological patterns with low cross-modality correlation. To address these challenges, we propose \underline{\textbf{Fed}}erated multimodal graph learning with \underline{\textbf{T}}opology-aware \underline{\textbf{C}}ross-modal \underline{\textbf{R}}outing (\textbf{FedTCR}), the first systematic algorithm designed for FMGL. To handle task heterogeneity, FedTCR employs a two-stage paradigm that comprises federated task-agnostic pre-training followed by isolated task-oriented fine-tuning. To jointly address modality and topology heterogeneity, FedTCR introduces a topology-aware cross-modal routing mechanism. Concretely, each client distills modality-specific knowledge into compact prototypes via topology-aware importance-weighted aggregation informed by graph structure; the server then evaluates cross-client cross-modal relationships among these structure-informed prototypes and routes informative ones as contrastive references, driving a tri-level cross-modal contrastive learning scheme that jointly aligns cross-client modalities while preserving discrimination. Experiments across 7 domains demonstrate that FedTCR outperforms state-of-the-art baselines on both graph-centric and modality-centric tasks.
\end{abstract}

\section{Introduction}
\label{sec: introduction}

        Multimodal-attributed graphs (MAGs) represent entities as nodes enriched with heterogeneous semantic content across multiple modalities, while edges encode relational dependencies among these entities. This expressive data structure has been widely adopted across diverse domains~\cite{mag_app_finance, mag_app_bio}. Motivated by such scenarios, multimodal graph learning (MGL) has garnered increasing attention, both enhancing graph-centric tasks through rich multimodal attributes~\cite{dgf} and improving modality-specific tasks via topological structures~\cite{mai2023multimodal}.

        Despite these advances, existing MGL approaches predominantly assume centralized data access, where all MAG data reside within a single learning system. In practice, however, multiple organizations (e.g., Instagram, X) independently maintain private MAGs that encode complementary knowledge, while privacy regulations and commercial constraints prohibit direct data aggregation, preventing isolated MGL models from leveraging this cross-platform collective intelligence.
    
        To this end, recent work~\cite{mm_openfgl} proposed federated multimodal graph learning (FMGL), extending federated graph learning (FGL)~\cite{openfgl} to MAGs. FGL enables collaborative optimization through iterative client-server communication, transmitting model parameters or gradients without exposing raw graph data. While conceptually promising, this paradigm lacks systematic analysis of key challenges and targeted solutions. We argue that naively extending traditional FGL methods to FMGL is insufficient, as they fail to address the \textit{multifaceted heterogeneity} inherent in decentralized MAGs:

        \paragraph{Task Heterogeneity.} The rich multimodal semantics of MAGs naturally lead to diverse application requirements across clients. Beyond conventional graph-level tasks (e.g., node classification), clients may pursue modality-centric objectives such as cross-modal retrieval or multimodal generation. However, mainstream FGL algorithms~\cite{fedsage_plus, fedpub, fgssl} are designed for a uniform task type and exclusively focus on graph-centric objectives~\cite{openfgl, fedgfm_plus}. Their architectures and optimization procedures are tightly coupled to a single task type, preventing collaboration when clients pursue different objectives.

        \paragraph{Modality Heterogeneity.} Distinct MAG sources and collection preferences across clients yield significant discrepancies in both modality quality and semantic domains. Clients with noisy or domain-biased modalities may propagate misleading knowledge during cross-client aggregation, degrading others' high-quality representations. Existing FGL methods address only unimodal attribute heterogeneity, lacking per-modality reliability assessment and cross-client cross-modal alignment capabilities.

        \paragraph{Topology Heterogeneity.} Graph topology provides crucial structural priors for modeling entity relationships. In FMGL, however, client MAGs often exhibit diverse topological patterns due to varying user behaviors or data collection processes. Existing FGL methods evaluate topological quality via homophily-based metrics on local subgraphs~\cite{fedtad, fedgta, fgssl}, implicitly assuming consistent homophily patterns across clients. This assumption is fragile in MAGs, where node pairs strongly homophilic under one modality may display weak homophily or even heterophily under another~\cite{dgf}, making traditional FGL methods unable to reliably distill and share topology priors across clients.

        Building upon these insights, we propose \underline{\textbf{Fed}}erated multimodal graph learning with \underline{\textbf{T}}opology-aware \underline{\textbf{C}}ross-modal \underline{\textbf{R}}outing (\textbf{FedTCR}), the first systematic algorithm addressing multifaceted heterogeneity in FMGL. To tackle \textit{\textbf{task heterogeneity}}, FedTCR adopts a two-stage paradigm: federated task-agnostic pre-training followed by isolated task-oriented fine-tuning, where clients collaboratively learn a shared multimodal graph encoder without assuming unified downstream objectives and then independently adapt it to their specific applications. To jointly address \textit{\textbf{modality and topology heterogeneity}}, FedTCR introduces a topology-aware cross-modal routing mechanism that synergistically integrates two key designs: (1) \textit{topology-aware prototype construction}, where each client estimates node importance from graph topology and performs importance-weighted aggregation to distill modality-specific knowledge into compact prototypes; (2) \textit{cross-modal prototype routing}, where the server evaluates cross-client cross-modal relationships among these structure-informed prototypes and routes informative ones as positive or negative references for each client. The routed references drive a tri-level cross-modal contrastive learning scheme (operating at node, neighbor, and client levels) that jointly aligns cross-client modalities while preserving discrimination.

        \paragraph{Contributions.} (1) \textbf{Valuable Insights.} We identify the \textit{multifaceted heterogeneity} in FMGL, revealing fundamental challenges overlooked by existing FGL methods. (2) \textbf{Novel Method.} We propose FedTCR, the first optimization algorithm explicitly designed for FMGL. FedTCR adopts a two-stage paradigm with task-agnostic federated pre-training to handle task heterogeneity, and introduces a topology-aware cross-modal routing mechanism to jointly address modality and topology heterogeneity. (3) \textbf{State-of-the-art Performance.} Experiments across 7 domains demonstrate that FedTCR achieves consistent and substantial improvements over state-of-the-art baselines on both graph-centric and modality-centric tasks.
\section{Related Work}
\label{sec: related work}

\paragraph{Multimodal Graph Learning.} MGL enriches graph learning with heterogeneous semantic content from multiple modalities. A line of work constructs modality-specific relational structures and fuses them for recommendation, e.g., MMGCN~\cite{mmgcn}, MGAT~\cite{mgat}, and LGMRec~\cite{lgmrec}. Another line exploits the interplay between multimodal semantics and topology for graph-oriented objectives~\cite{dmgc, dgf, mai2023multimodal}. More recent studies inject graph context into generative models, conditioning LLMs on multimodal neighborhoods for graph-to-text generation~\cite{yoon2023mmgl} and steering diffusion models with graph-guided signals for graph-to-image synthesis~\cite{instructg2i}. Complementary benchmarks such as OpenMAG~\cite{openmag} systematize MAG datasets, encoders, and task protocols. Despite this progress, existing MGL studies predominantly assume centralized data access and cannot accommodate privately held MAGs distributed across organizations.

\paragraph{Federated Graph Learning.} FGL extends federated learning to graph-structured data, enabling collaborative training without exposing raw graphs~\cite{fu2022fgl_survey_1}. Existing studies mainly fall into three categories: graph-level FGL, which mitigates cross-dataset heterogeneity via gradient-aware client clustering (GCFL+~\cite{gcfl_plus}); subgraph-level FGL, which addresses subgraph heterogeneity and missing cross-client links (Fed-PUB~\cite{fedpub}, FedGTA~\cite{fedgta}, FedSage+~\cite{fedsage_plus}, AdaFGL~\cite{li2024adafgl}, FGSSL~\cite{fgssl}, FGGP~\cite{fggp}, FedTAD~\cite{fedtad}, FedDEP~\cite{zhang2024fgl_feddep}, FedGCN~\cite{yao2024fgl_fedgcn}, FedGL~\cite{chen2021fedgl}); and node-level FGL over ego-networks (FedEgo~\cite{zhang2023node_level_fgl_fedego}), standardized by benchmarks such as FedGraphNN~\cite{he2021fedgraphnn}, FederatedScope-GNN~\cite{WangFedScope_22_fsg}, and OpenFGL~\cite{openfgl}. However, these methods are designed for unimodal graphs and a uniform task type, and thus cannot navigate the multifaceted heterogeneity inherent in federated multimodal graph learning, which is the focus of this work.

\section{Preliminaries}
\label{sec: preliminaries}

\paragraph{Multimodal-attributed Graphs.}
We consider a FMGL system consisting of $K$ clients coordinated by a central server. Each client $k$ privately maintains a MAG
$\mathcal{G}^k = (\mathcal{V}^k, \mathcal{E}^k, \{\mathbf{X}^{k,(m)}\}_{m\in\mathcal{M}})$, where $\mathcal{V}^k$ and $\mathcal{E}^k$ denote the node and edge sets, respectively.
$\mathcal{M}$ denotes the set of available modalities.
For each node $v_i \in \mathcal{V}^k$ under modality $m$, a modality-specific feature vector
$\mathbf{x}_i^{k,(m)} \in \mathbb{R}^{d_m}$ is obtained using a pre-trained modality encoder
(e.g., Sentence-BERT~\cite{sentence_bert} or T5~\cite{t5} for text, ViT~\cite{vit} or DINOv2~\cite{dinov2} for images).
The node features of modality $m$ are organized as a matrix
$\mathbf{X}^{k,(m)} \in \mathbb{R}^{|\mathcal{V}^k|\times d_m}$. All modalities share a common relational structure represented by the adjacency matrix
$\mathbf{A}^k \in \mathbb{R}^{|\mathcal{V}^k|\times|\mathcal{V}^k|}$,
with degree matrix $\mathbf{D}^k$ defined as
$\mathbf{D}^k_{ii}=\sum_j \mathbf{A}^k_{ij}$ and $\mathbf{D}^k_{ij}=0$ for $i\neq j$.
Following standard graph signal processing,
the symmetrically normalized adjacency matrix is defined as
$\tilde{\mathbf{A}}^k=(\mathbf{D}^k)^{-1/2}\mathbf{A}^k(\mathbf{D}^k)^{-1/2}$,
and the corresponding graph Laplacian is
$\mathbf{L}^k=\mathbf{I}-\tilde{\mathbf{A}}^k$.

\paragraph{Federated Training Protocol.} Due to privacy constraints, clients cannot share raw graph data; instead, they collaborate through iterative server-client communication. FedTCR adopts a two-stage paradigm: \textit{federated task-agnostic pre-training} followed by \textit{isolated task-oriented fine-tuning}. During pre-training, at each communication round the server samples a subset $\mathcal{S}$ of clients; each selected client $k \in \mathcal{S}$ downloads the global parameters $\hat{\mathbf{W}}$ and any server-side messages (e.g., routed prototypes) and performs local optimization on its private MAG $\mathcal{G}^k$; the server then aggregates the updated parameters via node-count-weighted averaging, $\hat{\mathbf{W}} = \sum_{k\in \mathcal{S}} \frac{|\mathcal{V}^k|}{N} \mathbf{W}^k$ with $N = \sum_{k\in \mathcal{S}}|\mathcal{V}^k|$, and performs additional procedures (e.g., cross-modal prototype routing). After pre-training converges, each client independently fine-tunes the encoder with a task-specific head without further communication.

\paragraph{Downstream Tasks.} We consider two graph-centric tasks (node classification and link prediction) and two modality-centric tasks (modality retrieval and modality generation covering G2Text and G2Image), with formal definitions deferred to Appendix~\ref{appendix: more experimental setups}. In this work, we focus on learning a task-agnostic multimodal graph encoder through federated pre-training, which can be subsequently adapted to these diverse downstream applications on individual clients.

\section{Methodology}
\label{sec: methodology}

We present FedTCR (Fig.~\ref{fig: framework}), an optimization algorithm explicitly designed for FMGL to address the multifaceted heterogeneity in Sec.~\ref{sec: introduction}. FedTCR adopts a two-stage paradigm to handle task heterogeneity: federated task-agnostic pre-training (Sec.~\ref{sec: Federated Task-agnostic Pre-training}) followed by isolated task-oriented fine-tuning (Sec.~\ref{sec: Isolated Task-oriented Fine-tuning}), and jointly addresses modality and topology heterogeneity during pre-training through a topology-aware cross-modal routing mechanism.

\begin{figure*}[htb]
 \centering
 \includegraphics[width=0.998\textwidth]{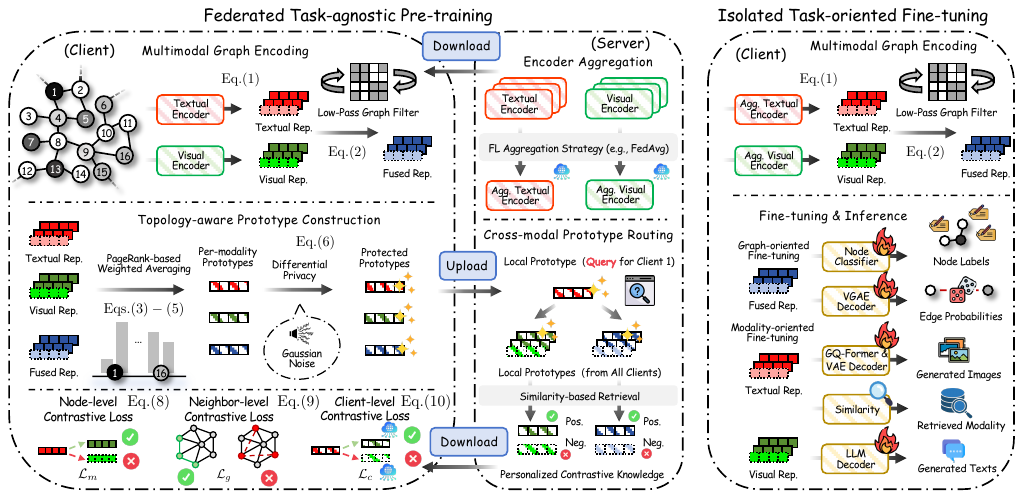}
  \caption{Overview of FedTCR, which addresses multifaceted heterogeneity in FMGL through federated task-agnostic pre-training with topology-aware cross-modal routing, followed by isolated task-oriented fine-tuning.}
\label{fig: framework}
\end{figure*}

\subsection{Federated Task-agnostic Pre-training}
\label{sec: Federated Task-agnostic Pre-training}

To address task heterogeneity, FedTCR decouples task-agnostic representation learning from task-specific adaptation. During pre-training, clients collaboratively learn a shared multimodal graph encoder without assuming unified downstream objectives, thereby capturing transferable multimodal and structural knowledge across decentralized MAGs. At each communication round, pre-training proceeds in two stages: (1) Each client performs local multimodal graph encoding, constructs topology-aware modality prototypes, and optimizes multimodal representations; (2) The server aggregates encoder parameters via weighted averaging and performs cross-modal prototype routing to deliver personalized contrastive knowledge to each client for the next round of optimization.

\paragraph{Multimodal Graph Encoding.}
For each client $k$ with MAG $\mathcal{G}^k$, we first project each modality's raw features into a shared $d$-dimensional latent space via modality-specific linear transformations $\mathbf{\Phi}^{(m)} \in \mathbb{R}^{d_m \times d}$:
\begin{equation}
\label{eq: modality projection}
\mathbf{z}_i^{k,(m)} = \frac{(\mathbf{X}^{k,(m)} \mathbf{\Phi}^{(m)})_i}{\|(\mathbf{X}^{k,(m)} \mathbf{\Phi}^{(m)})_i\|_2}, \quad \forall i \in \mathcal{V}^k,\, m \in \mathcal{M},
\end{equation}
where $(\cdot)_i$ denotes the $i$-th row, and $\mathbf{Z}^{k,(m)} \in \mathbb{R}^{|\mathcal{V}^k| \times d}$ stacks these rows to stabilize training. The fused representation is obtained by averaging across modalities: $\bar{\mathbf{Z}}^k = \frac{1}{|\mathcal{M}|}\sum_{m \in \mathcal{M}} \mathbf{Z}^{k,(m)}$.
We then apply low-pass graph filtering over the local topology to produce the multimodal graph representation:
\begin{equation}
\label{eq: graph filter}
\mathbf{H}^k = \frac{1}{\alpha+1}\sum_{l=0}^{L}\left(\frac{\alpha}{\alpha+1}\tilde{\mathbf{A}}^k\right)^l \bar{\mathbf{Z}}^k,
\end{equation}
where $\alpha > 0$ controls the smoothing strength, $\tilde{\mathbf{A}}^k$ is the symmetrically normalized adjacency matrix, and $L$ is the propagation depth. As we show in Theorem~\ref{thm: smooth fused rep}, many widely used GNNs (e.g., GCN~\cite{gcn}, SGC~\cite{sgc}) can be viewed as special cases of this graph-filtering formulation.

\paragraph{Topology-aware Prototype Construction.}
In FMGL, we distill each client's modality-specific knowledge into compact prototypes for cross-client alignment. Naive mean aggregation treats all nodes equally and ignores the local MAG topology; inspired by~\cite{louvain,random_walk}, we instead assign each node a topology-aware weight measuring its representativeness for the graph's community. 
Without loss of generality, we use PageRank algorithm~\cite{pagerank} to compute these weights, so that importance propagates over multiple hops rather than depending only on 1-hop degree. Let $\alpha \in (0,1)$ be the damping factor; the PageRank vector $\boldsymbol{\pi}^k \in \mathbb{R}^{|\mathcal{V}^k|}$ is the stationary distribution
\begin{equation}
\label{eq: pagerank stationary}
\boldsymbol{\pi}^k = \frac{1-\alpha}{|\mathcal{V}^k|}\,\mathbf{1} + \alpha \left((\mathbf{D}^k)^{-1} \mathbf{A}^k\right)^\top \boldsymbol{\pi}^k,
\end{equation}
where $\mathbf{A}^k$ is the adjacency matrix and $\mathbf{D}^k$ is the diagonal degree matrix; the transpose gives the transition probabilities from neighbors into each node. We compute $\boldsymbol{\pi}^k$ by power iteration, then set $\omega_i^k = \pi_i^k$ after convergence, formulated as:
\begin{equation}
\label{eq: node importance}
\pi_i^{k,(t+1)} = \frac{1-\alpha}{|\mathcal{V}^k|} + \alpha \sum_{j \in \mathcal{V}^k} \frac{\mathbf{A}^k_{ji}}{d_j^k} \pi_j^{k,(t)},
\end{equation}
where $d_j^k = \sum_{l} \mathbf{A}^k_{jl}$ is the degree of node $v_j$. $\boldsymbol{\pi}^k$ is normalized so that $\sum_{i \in \mathcal{V}^k} \omega_i^k = 1$ with $\omega_i^k = \pi_i^k$.

Using these normalized weights, we define modality-specific prototypes by importance-weighted aggregation:
\begin{equation}
\label{eq: prototype construction}
\mathbf{P}^{k,(m)} = \sum_{i \in \mathcal{V}^k} \omega_i^k \cdot \mathbf{z}_i^{k,(m)}, \quad \forall m \in \bar{\mathcal{M}},
\end{equation}
where $\bar{\mathcal{M}} = \mathcal{M} \cup \{|\mathcal{M}|+1\}$ adds the fused modality; $\mathbf{P}^{k,(m)}$ for $m \in \bar{\mathcal{M}}$ is computed from $\mathbf{Z}^{k,(m)}$, and $\mathbf{Z}^{k,(|\mathcal{M}|+1)}=\mathbf{H}^k$ (i.e., Eq.~\eqref{eq: graph filter}). 
Each prototype thus distills the local MAG's modality-specific semantics and structure into a single vector. 
Together with the updated encoder parameters, they are uploaded to the server for aggregation and used in the subsequent routing. 
To protect privacy, each client further applies $(\varepsilon,\delta)$-differential privacy (DP)~\cite{dwork2008differential,abadi2016deep} before uploading: each prototype $\mathbf{P}^{k,(m)}$ is L2-clipped with a bound $C > 0$ and then perturbed with Gaussian noise calibrated to the target $(\varepsilon,\delta)$:
\begin{equation}
\label{eq: dp clip}
\begin{aligned}
\tilde{\mathbf{P}}^{k,(m)} &= \mathbf{P}^{k,(m)} \cdot \min\Bigl(1, \frac{C}{\|\mathbf{P}^{k,(m)}\|_2}\Bigr), \\
\hat{\mathbf{P}}^{k,(m)} &= \tilde{\mathbf{P}}^{k,(m)} + \mathcal{N}(0, \sigma^2 C^2 \mathbf{I}_d),
\end{aligned}
\end{equation}
where $\sigma$ is the noise scale. The server performs cross-modal prototype routing (Eq.~\eqref{eq: prototype routing}) on the noisy prototypes, and $\varepsilon$ and $C$ jointly control the privacy-utility tradeoff.

\paragraph{Cross-modal Prototype Routing.}
After receiving all clients' topology-aware prototypes and encoder parameters, the server (1) aggregates encoder parameters $\{\mathbf{\Phi}^{(m)}\}_{m \in \mathcal{M}}$ via a mainstream FL algorithm (e.g., FedAvg~\cite{fedavg}), and (2) performs cross-modal prototype routing to produce and deliver personalized contrastive knowledge to each client, which guides the next round of federated optimization.

For each client $a$, modality $i \in \bar{\mathcal{M}}$, and each other modality $j \neq i$, the server selects from $\{\mathbf{P}^{b,(j)}\}_{b \neq a}$ the most similar prototype as the positive and the most dissimilar as the negative, which can be formulated as follows:
\begin{equation}
\label{eq: prototype routing}
\begin{aligned}
\mathbf{P}_{\text{pos}}^{a,(i,j)} &= \hat{\mathbf{P}}^{b^*,(j)}, \quad b^* = \argmax_{b \neq a} \cos(\hat{\mathbf{P}}^{a,(i)}, \hat{\mathbf{P}}^{b,(j)}), \\
\mathbf{P}_{\text{neg}}^{a,(i,j)} &= \hat{\mathbf{P}}^{b',(j)}, \quad b' = \argmin_{b \neq a} \cos(\hat{\mathbf{P}}^{a,(i)}, \hat{\mathbf{P}}^{b,(j)}).
\end{aligned}
\end{equation}
Here $\cos(\cdot,\cdot)$ denotes cosine similarity. The server sends the routed pairs $\{(\mathbf{P}_{\text{pos}}^{a,(i,j)}, \mathbf{P}_{\text{neg}}^{a,(i,j)})\}_{i,j}$ to each client $a$ for client-level contrastive learning (Eq.~\eqref{eq: cross client loss}). Enforcing this contrast yields two effects: (1) per-modality cross-client alignment encourages complementary knowledge transfer and mitigates propagation of noisy or domain-biased modalities, addressing modality heterogeneity; (2) aligning topology-aware prototypes across modality boundaries harmonizes diverse topological patterns, addressing the topology heterogeneity inherent in decentralized MAGs.

\paragraph{Local Optimization Objectives.}
Each client optimizes a composite objective comprising two local multimodal graph learning losses and one cross-client alignment loss, following the cross-contrastive paradigm. Letting $\psi(\mathbf{u},\mathbf{v})=\exp(\cos(\mathbf{u},\mathbf{v})/\tau)$ with temperature $\tau$, and denoting $\mathbf{z}_i^{k,(|\mathcal{M}|+1)} = \mathbf{h}_i^k$ as the fused graph-filtered representation treated as an additional modality, we write $\psi_{ij}^{(m_1,m_2)}$ as shorthand for $\psi\big(\mathbf{z}_i^{k,(m_1)}, \mathbf{z}_j^{k,(m_2)}\big)$:

(1) \textit{\textbf{Node-level Contrastive Loss}} maximizes the inter-modality agreement of the same node while contrasting it against imposter nodes from the other modality:
\begin{equation}
\label{eq: modal alignment}
\mathcal{L}_m = \sum_{\substack{m_1 \in \bar{\mathcal{M}} \\ m_2 \in \bar{\mathcal{M}} \setminus \{m_1\}}} \sum_{i \in \mathcal{V}^k} -\log \frac{\psi_{ii}^{(m_1,m_2)}}{\sum_{j \in \mathcal{V}^k} \psi_{ij}^{(m_1,m_2)}}.
\end{equation}

(2) \textit{\textbf{Neighbor-level Contrastive Loss}} captures the semantic relations underlying the graph topology. For each node $v_i \in \mathcal{V}^k$, we simulate random walks on client $k$'s graph to collect a set of topological neighbors $\mathcal{P}_i^k$ as positives, and randomly sample $\mathcal{N}_i^k \subset \mathcal{V}^k \setminus \mathcal{P}_i^k$ as negatives:
\begin{equation}
\label{eq: graph contrastive}
\mathcal{L}_g = \sum_{\substack{m_1 \in \bar{\mathcal{M}} \\ m_2 \in \bar{\mathcal{M}} \setminus \{m_1\}}} \sum_{i \in \mathcal{V}^k} -\log \frac{\sum_{j \in \mathcal{P}_i^k} \psi_{ij}^{(m_1,m_2)}}{\sum_{j \in \mathcal{P}_i^k \cup \mathcal{N}_i^k} \psi_{ij}^{(m_1,m_2)}}.
\end{equation}

(3) \textbf{Client-level Contrastive Loss} leverages the personalized contrastive knowledge from the server (Eq.~\eqref{eq: prototype routing}) to perform prototype-level contrastive learning across clients and modalities. Writing $\psi_{\mathrm{pos}}^{(i,j)} = \psi\big(\mathbf{P}^{k,(i)}, \mathbf{P}_{\text{pos}}^{k,(i,j)}\big)$ and $\psi_{\mathrm{neg}}^{(i,j)} = \psi\big(\mathbf{P}^{k,(i)}, \mathbf{P}_{\text{neg}}^{k,(i,j)}\big)$:
\begin{equation}
\label{eq: cross client loss}
\mathcal{L}_c = \sum_{\substack{i \in \bar{\mathcal{M}} \\ j \in \bar{\mathcal{M}} \setminus \{i\}}} -\log \frac{\psi_{\mathrm{pos}}^{(i,j)}}{\psi_{\mathrm{pos}}^{(i,j)} + \psi_{\mathrm{neg}}^{(i,j)}}.
\end{equation}
Notably, Eq.~\eqref{eq: cross client loss} is activated from the second round after the server collects all client prototypes.

Finally, the overall client-side optimization objective during pre-training is formulated as follows:
\begin{equation}
\label{eq: total loss}
\mathcal{L} = \lambda_m \mathcal{L}_m + \lambda_g \mathcal{L}_g + \lambda_c \mathcal{L}_c,
\end{equation}
where $\lambda_m$, $\lambda_g$, and $\lambda_c$ are balancing coefficients.

\subsection{Isolated Task-oriented Fine-tuning}
\label{sec: Isolated Task-oriented Fine-tuning}

Once federated pre-training converges, each client retains the aggregated encoder and independently fine-tunes on its local MAG without further communication. Client $k$ attaches a task-specific head $g^k$ and optimizes it jointly with the encoder via a task-specific loss $\mathcal{L}_{\text{task}}$: for \textit{graph-centric tasks} (e.g., node classification, link prediction), the head takes the graph-filtered representation $\mathbf{H}^k$ (Eq.~\eqref{eq: graph filter}); for \textit{modality-centric tasks} (e.g., modality retrieval), it takes the modality-specific embeddings $\mathbf{Z}^{k,(m)}$ (Eq.~\eqref{eq: modality projection}) to retain modality-level characteristics. This isolation is deliberate: it avoids conflicting gradients from heterogeneous downstream objectives while preserving the transferable knowledge from pre-training. The complete procedure is given in Appendix~\ref{appendix: pseudocode}.
\section{Experiments}
\label{sec: experiments}

We evaluate FedTCR along six research questions, with full reproducibility details deferred to Appendix~\ref{appendix: dataset details} and Appendix~\ref{appendix: more experimental setups}:
\textbf{Q1}: Can FedTCR outperform existing methods on graph-centric and modality-centric tasks under homogeneous-task clients (Sec.~\ref{sec: main result})?
\textbf{Q2}: Can it enable effective collaboration across task-heterogeneous clients (Sec.~\ref{sec: task hetero})?
\textbf{Q3}: What does each module of FedTCR contribute (Sec.~\ref{sec: ablation study})?
\textbf{Q4}: How robust is it under data sparsity (Sec.~\ref{sec: robustness})?
\textbf{Q5}: How sensitive is it to key hyperparameters (Sec.~\ref{sec: sensitivity analysis})?
\textbf{Q6}: How does it perform under differentially private prototype perturbation (Sec.~\ref{sec: privacy utility})?

\subsection{Experimental Setup}
\label{sec: experimental setup}

\paragraph{Datasets and Simulation Strategy.} We evaluate FedTCR on 8 publicly available MAG datasets spanning 7 domains: Movies~\cite{ni2019justifying}, Grocery and Toys~\cite{ni2019justifying,hou2024bridging}, RedditS~\cite{RedditS}, BiliDance~\cite{zhang2024ninerec}, ele-fashion~\cite{ni2019justifying,hou2024bridging}, Flickr30k~\cite{plummer2015_Flickr30k}, and SemArt~\cite{garcia2018_SemArt}. Following subgraph FL benchmarks~\cite{openfgl}, each MAG is partitioned into client subgraphs with the Louvain algorithm~\cite{louvain}: 5 clients per MAG in the \textit{homogeneous-task} configuration, and 4 clients per MAG in the \textit{heterogeneous-task} one (one MAG per task, 20 clients across the 5 tasks in total). Dataset statistics and descriptions are in Appendix~\ref{appendix: dataset details}.

\paragraph{Baselines.} We compare against four method families: (1) \textit{Isolated Training} (MMGCN~\cite{mmgcn}, MGAT~\cite{mgat}, LGMRec~\cite{lgmrec}); (2) \textit{Classical FL} (FedAvg~\cite{fedavg}, SCAFFOLD~\cite{scaffold}, FedDC~\cite{feddc}) with an MMGCN backbone; (3) \textit{Multimodal FL} (PEPSY~\cite{pepsy}, FedMVP~\cite{fedmvp}, FedMAC~\cite{fedmac}); and (4) \textit{Unimodal FGL} (FedSage+~\cite{fedsage_plus}, FedGTA~\cite{fedgta}, FedSPA~\cite{fedspa}, FedIIH~\cite{fediih}, FedSSP~\cite{fedssp}, S2FGL~\cite{s2fgl}, FedLap~\cite{fedlap}). Unimodal FGL methods are graph-centric only and thus inapplicable to modality-centric tasks; label-dependent methods (e.g., FedGTA, FedIIH) cannot perform link prediction. These cases are marked as ``N/A''. Baseline details are in Appendix~\ref{appendix: baseline details}.

\paragraph{Evaluation Metrics.} We adopt task-specific metrics: ACC for node classification, AUC for link prediction, R@1 for modality retrieval, and R-L (ROUGE-L) for G2Text with C-S (CLIP-Score) for G2Image.

\begin{table*}[t]
    \centering
    \resizebox{\textwidth}{!}{
    \begin{tabular}{lcc|cc|cc|cc}
    \toprule
      & \multicolumn{2}{c|}{Node Classification (ACC)} & \multicolumn{2}{c|}{Link Prediction (AUC)} & \multicolumn{2}{c|}{Modality Retrieval (R@1)} & \multicolumn{2}{c}{Modality Generation (R-L/C-S)} \\
    \cmidrule(lr){2-3} \cmidrule(lr){4-5} \cmidrule(lr){6-7} \cmidrule(lr){8-9}
    Methods & Movies & Grocery & RedditS & BiliDance & Toys & ele-fashion & Flickr30k & SemArt \\
    \midrule
    MMGCN & \underline{53.47$\pm$0.32} & 80.39$\pm$0.51 & 89.52$\pm$0.23 & 76.06$\pm$0.42 & 49.29$\pm$0.46 & 81.74$\pm$0.28 & 24.56$\pm$0.23 & 66.12$\pm$0.28 \\ 
    MGAT & 52.12$\pm$0.34 & 80.41$\pm$0.49 & 90.23$\pm$0.25 & 76.21$\pm$0.44 & 48.89$\pm$0.43 & 82.12$\pm$0.31 & 24.34$\pm$0.21 & 65.78$\pm$0.30 \\ 
    LGMRec & 53.34$\pm$0.29 & 80.56$\pm$0.53 & 90.89$\pm$0.22 & \underline{76.68$\pm$0.38} & \underline{49.78$\pm$0.45} & \underline{82.45$\pm$0.26} & 25.12$\pm$0.24 & 66.89$\pm$0.26 \\ 
    Isolated & 53.07$\pm$0.36 & \underline{80.93$\pm$0.57} & 88.34$\pm$0.26 & 62.89$\pm$0.46 & 49.44$\pm$0.52 & 81.37$\pm$0.31 & 23.95$\pm$0.26 & 65.34$\pm$0.31 \\ 
    \midrule
    FedAvg & 49.07$\pm$0.43 & 78.18$\pm$0.68 & 92.32$\pm$0.31 & 72.30$\pm$0.56 & 45.31$\pm$0.62 & 68.96$\pm$0.37 & 24.02$\pm$0.33 & 65.56$\pm$0.37 \\ 
    SCAFFOLD & 49.56$\pm$0.45 & 78.66$\pm$0.63 & 92.78$\pm$0.28 & 72.78$\pm$0.52 & 45.89$\pm$0.58 & 69.67$\pm$0.39 & 23.89$\pm$0.29 & 65.89$\pm$0.34 \\ 
    FedDC & 49.87$\pm$0.40 & 78.43$\pm$0.66 & 93.01$\pm$0.30 & 73.01$\pm$0.54 & 46.23$\pm$0.64 & 69.89$\pm$0.35 & 24.45$\pm$0.31 & 66.12$\pm$0.38 \\ 
    \midrule
    PEPSY & 48.67$\pm$0.47 & 78.13$\pm$0.73 & 93.45$\pm$0.33 & 73.56$\pm$0.60 & 47.21$\pm$0.66 & 70.12$\pm$0.40 & 24.78$\pm$0.35 & 66.45$\pm$0.40 \\ 
    FedMVP & 48.36$\pm$0.50 & 77.98$\pm$0.71 & 93.21$\pm$0.36 & 74.18$\pm$0.63 & 44.73$\pm$0.69 & 69.52$\pm$0.44 & 25.01$\pm$0.32 & 66.78$\pm$0.43 \\ 
    FedMAC & 49.01$\pm$0.44 & 78.35$\pm$0.76 & \underline{93.67$\pm$0.31} & 74.02$\pm$0.57 & 46.89$\pm$0.63 & 70.44$\pm$0.41 & \underline{25.23$\pm$0.34} & \underline{67.01$\pm$0.38} \\ 
    \midrule
    FedSage+ & 50.23$\pm$0.36 & 79.21$\pm$0.56 & 92.89$\pm$0.26 & 72.67$\pm$0.46 & N/A & N/A & N/A & N/A \\ 
    FedGTA & 50.89$\pm$0.32 & 79.56$\pm$0.51 & N/A & N/A & N/A & N/A & N/A & N/A \\ 
    FedSPA & 48.89$\pm$0.64 & 79.34$\pm$1.01 & 93.12$\pm$0.46 & 73.12$\pm$0.82 & N/A & N/A & N/A & N/A \\ 
    FedIIH & 51.03$\pm$0.34 & 79.62$\pm$0.53 & N/A & N/A & N/A & N/A & N/A & N/A \\
    FedSSP & 48.35$\pm$0.43 & 78.26$\pm$0.67 & 93.52$\pm$0.31 & 73.45$\pm$0.55 & 43.11$\pm$0.61 & 70.31$\pm$0.36 & N/A & N/A \\ 
    S2FGL & 48.56$\pm$0.89 & 79.12$\pm$1.38 & 92.97$\pm$0.63 & 72.89$\pm$1.13 & N/A & N/A & N/A & N/A \\ 
    FedLap & 50.77$\pm$0.30 & 79.43$\pm$0.47 & 93.23$\pm$0.22 & 73.23$\pm$0.39 & N/A & N/A & N/A & N/A \\ 
    \midrule
    \textbf{FedTCR (Ours)} & \textbf{54.27$\pm$0.25} & \textbf{81.01$\pm$0.40} & \textbf{97.84$\pm$0.18} & \textbf{78.45$\pm$0.32} & \textbf{53.64$\pm$0.36} & \textbf{83.87$\pm$0.22} & \textbf{26.89$\pm$0.18} & \textbf{69.45$\pm$0.22} \\ 
    \bottomrule
    \end{tabular}
    }
    \caption{Performance comparison under the homogeneous-task setting (5 clients). `N/A' indicates the algorithm is not applicable to the task. Best results are in bold and the second-best are underlined.}
    \label{tab: main result}
    \vspace{-0.1cm}
\end{table*}

\subsection{Homogeneous Tasks (Answer for \textbf{Q1})}
\label{sec: main result}

To answer \textbf{Q1}, we compare FedTCR against all four baseline families on the 8 MAGs under the homogeneous-task setting (5 clients per MAG, all sharing the MAG's designated task); Table~\ref{tab: main result} reports the full per-dataset comparison.

\paragraph{Graph-centric Tasks.}

FedTCR consistently achieves the best performance across all graph-centric benchmarks (Table~\ref{tab: main result}), improving over the second-best method by +1.50\% ACC on Movies and +0.10\% on Grocery for node classification, and surpassing the runner-up by +4.45\% AUC on RedditS and +2.31\% on BiliDance for link prediction. Classical FL methods with the MMGCN backbone outperform isolated training on the link prediction benchmarks, confirming the benefit of federated collaboration on structure-dependent tasks, but they cannot exploit modality- and topology-specific knowledge across clients as FedTCR does.

\paragraph{Modality-centric Tasks.}

FedTCR also demonstrates clear superiority on modality-centric tasks: +7.75\% R@1 on Toys and +1.72\% on ele-fashion for modality retrieval, and +6.58\% R-L on Flickr30k (G2Text) and +3.64\% C-S on SemArt (G2Image) for modality generation, over the best competing method. This consistent superiority shows that topology-aware cross-modal routing effectively transfers complementary multimodal knowledge across clients, while task-agnostic pre-training learns representations that generalize across downstream applications.

\begin{figure*}[htb]
\centering
 \includegraphics[width=0.998\textwidth]{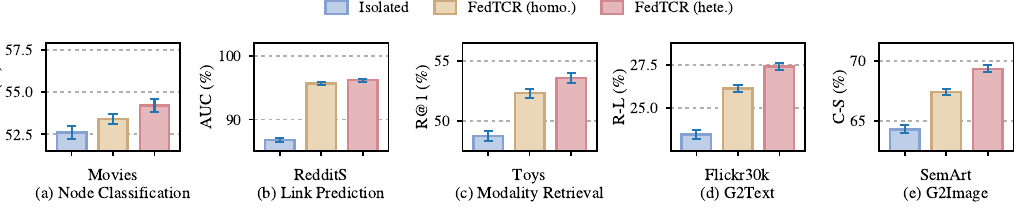}
 \caption{Performance under the task-heterogeneous setting (4 clients for each dataset, i.e., one MAG per task). FedTCR (homo.) federates only the 4 same-task clients, while FedTCR (hete.) pre-trains all 20 clients across the 5 tasks.}
\label{fig: task hetero}
\end{figure*}

\subsection{Heterogeneous Tasks (Answer for \textbf{Q2})}
\label{sec: task hetero}

To answer \textbf{Q2}, we evaluate FedTCR under a task-heterogeneous setting comprising 5 downstream tasks, each instantiated on a different MAG partitioned into 4 clients: node classification on Movies, link prediction on RedditS, modality retrieval on Toys, G2Text on Flickr30k, and G2Image on SemArt. Existing FL and FGL methods cannot join such a federation at all, as their architectures couple the federated parameters and the training objective to a single task type; their only viable collaboration is same-task federation, which is already covered by Table~\ref{tab: main result} where all of them trail FedTCR. We therefore conduct a controlled self-comparison that varies only the collaboration scope over identical per-task partitions: \textit{Isolated} (no federation), FedTCR (homo.) (4 same-task clients), and FedTCR (hete.) (all 20 clients). As shown in Fig.~\ref{fig: task hetero}, FedTCR (homo.) already improves over Isolated training by +7.06\% on average, and FedTCR (hete.) further improves over FedTCR (homo.) by +2.44\% on average. The gains are the largest on the modality-generation tasks (+3.87\% on average), which depend most heavily on the complementary cross-modal knowledge transferred by the topology-aware routing, and remain positive on graph-centric tasks (+1.00\%). Since the arms differ only in the collaboration scope, these results establish that cross-task collaboration carries genuine net benefit rather than interference, enabling cross-task collective intelligence that is architecturally out of reach for existing methods.

\subsection{Ablation Study (Answer for \textbf{Q3})}
\label{sec: ablation study}

To answer \textbf{Q3}, we ablate five components of FedTCR: (1) \textit{w/o TW}: replaces PageRank importance weighting with uniform averaging in prototype construction (Eq.~\eqref{eq: prototype construction}); (2) \textit{w/o CR}: removes cross-modal prototype routing and the associated client-level loss $\mathcal{L}_c$ (Eq.~\eqref{eq: cross client loss}); (3) \textit{w/o $\mathcal{L}_m$} and (4) \textit{w/o $\mathcal{L}_g$}: remove the node-level (Eq.~\eqref{eq: modal alignment}) and neighbor-level (Eq.~\eqref{eq: graph contrastive}) contrastive losses, respectively; and (5) \textit{w/o TS}: removes the pre-training stage and directly performs end-to-end federated task-specific training. All variants are trained under the identical protocol and hyperparameters as the full model, so any performance gap is attributable solely to the removed component.

\begin{table}
    \centering
    \resizebox{\columnwidth}{!}{
    \begin{tabular}{lccccc}
    \toprule
      & Node Cls. & Link Pred. & Retrieval & G2Text & G2Image \\
      & (ACC) & (AUC) & (R@1) & (R-L) & (C-S) \\
    \cmidrule(lr){2-2} \cmidrule(lr){3-3} \cmidrule(lr){4-4} \cmidrule(lr){5-5} \cmidrule(lr){6-6}
    Variants & Movies & RedditS & Toys & Flickr30k & SemArt \\
    \midrule
    w/o TW & 53.69$\pm$0.30 & 96.89$\pm$0.21 & 52.45$\pm$0.42 & 26.23$\pm$0.22 & 68.56$\pm$0.25 \\ 
    w/o CR & 53.03$\pm$0.33 & 95.67$\pm$0.24 & 51.12$\pm$0.47 & 25.56$\pm$0.26 & 68.04$\pm$0.28 \\ 
    w/o $\mathcal{L}_m$ & 53.92$\pm$0.28 & 97.12$\pm$0.20 & 52.91$\pm$0.40 & 26.45$\pm$0.19 & 68.78$\pm$0.24 \\ 
    w/o $\mathcal{L}_g$ & 53.58$\pm$0.31 & 96.45$\pm$0.22 & 52.01$\pm$0.44 & 26.01$\pm$0.23 & 68.34$\pm$0.26 \\ 
    w/o TS & 53.36$\pm$0.34 & 96.12$\pm$0.25 & 51.67$\pm$0.48 & 25.78$\pm$0.25 & 67.98$\pm$0.29 \\ 
    \midrule
    FedTCR (Full) & \textbf{54.27$\pm$0.25} & \textbf{97.84$\pm$0.18} & \textbf{53.64$\pm$0.36} & \textbf{26.89$\pm$0.18} & \textbf{69.45$\pm$0.22} \\ 
    \bottomrule
    \end{tabular}
    }
    \captionof{table}{Ablation study (5 clients). Each row removes one component from FedTCR.}
    \label{tab: ablation}
\end{table}

\begin{figure}[htb]
\centering

 \includegraphics[width=0.48\textwidth]{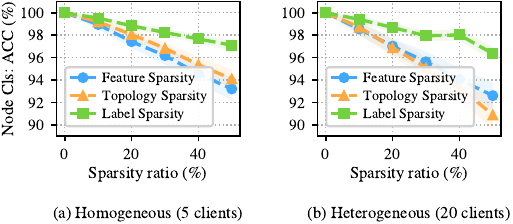}
 \caption{Retained ACC on Movies under three sparsity types at varying ratios, relative to the unperturbed setting. Shaded bands denote the standard deviation over five seeds.}
\label{fig: robustness}
\end{figure}

As shown in Table~\ref{tab: ablation}, removing any component consistently degrades performance, demonstrating their complementary contributions. \textit{w/o CR} incurs the largest drop (e.g., $-$2.28\% ACC on Movies and $-$4.70\% R@1 on Toys), highlighting the importance of cross-modal prototype routing for exchanging task-relevant knowledge beyond parameter aggregation. The degradation of \textit{w/o TS} further verifies that task-agnostic pre-training is crucial for learning transferable representations before task-specific adaptation. In addition, \textit{w/o TW} consistently underperforms the full model, validating the benefit of topology-aware prototype construction, while the larger impact of \textit{w/o $\mathcal{L}_g$} compared with \textit{w/o $\mathcal{L}_m$} suggests that neighborhood-level supervision provides stronger guidance than node-level alignment.

\subsection{Robustness Analysis (Answer for \textbf{Q4})}
\label{sec: robustness}

To answer \textbf{Q4}, we evaluate FedTCR under three data sparsity scenarios controlled by ratio $\alpha$: \textit{feature sparsity} (feature masking), \textit{topology sparsity} (edge removal), and \textit{label sparsity} (label removal), in both task settings. As shown in Fig.~\ref{fig: robustness}, FedTCR remains robust across all sparsity types and task settings: at $\alpha=50\%$ in the homogeneous setting it preserves 93.2\%, 94.1\%, and 97.1\% of its original ACC under feature, topology, and label sparsity, respectively. The degradation pattern is setting-dependent: feature sparsity hurts most under homogeneous clients, where corrupted modality representations propagate directly into prototype construction, whereas topology sparsity dominates in the heterogeneous-task federation, where divergent objectives make topology-aware routing the main channel of cross-client support; label sparsity stays the mildest in both settings, since task-agnostic pre-training requires no downstream supervision.

\subsection{Hyperparameter Sensitivity (Answer for \textbf{Q5})}
\label{sec: sensitivity analysis}

\begin{table}
    \centering
    \resizebox{\columnwidth}{!}{
    \begin{tabular}{llccccc}
    \toprule
    \multicolumn{2}{l}{Hyperparameter} & \multicolumn{5}{c}{ Node Classification Accuracy (\%)} \\
    \midrule
    $\lambda_m$ & $\{0.1, 0.2, \mathbf{0.3}, 0.4, 0.5\}$ & 53.96 & 54.12 & \textbf{54.27} & 54.18 & 54.02 \\ 
    $\lambda_g$ & $\{0.05, 0.1, \mathbf{0.2}, 0.3, 0.5\}$ & 53.88 & 54.09 & \textbf{54.27} & 54.16 & 53.97 \\ 
    $\lambda_c$ & $\{0.1, 0.3, \mathbf{0.5}, 0.7, 1.0\}$ & 52.41 & 53.72 & \textbf{54.27} & 54.05 & 53.83 \\ 
    $\tau$ & $\{0.1, 0.3, \mathbf{0.5}, 0.7, 1.0\}$ & 54.03 & 54.19 & \textbf{54.27} & 53.86 & 52.58 \\ 
    $L$ & $\{1, 2, \mathbf{3}, 4, 5\}$ & 54.01 & 54.17 & \textbf{54.27} & 54.21 & 54.08 \\ 
    \bottomrule
    \end{tabular}
    }
    \captionof{table}{Hyperparameter sensitivity on Movies. Each row sweeps one hyperparameter over the grid shown, with the others fixed at their default values.}
    \label{tab: sensitivity}
\end{table}

To answer \textbf{Q5}, we sweep the loss weights $\lambda_m$, $\lambda_g$, $\lambda_c$ (summing to $1$ by default), the temperature $\tau$, and the propagation depth $L$. As shown in Table~\ref{tab: sensitivity}, FedTCR performs best at the default configuration and remains stable around it; the model is primarily sensitive to $\lambda_c$ and $\tau$: an overly small $\lambda_c$ weakens the cross-client alignment signal, while an overly large $\tau$ makes the contrastive distribution overly uniform. In contrast, varying $\lambda_m$, $\lambda_g$, or $L$ changes the accuracy only marginally, indicating that the defaults already lie in a flat optimal region and that FedTCR requires little hyperparameter tuning in practice.

\subsection{Privacy-Utility Tradeoff (Answer for \textbf{Q6})}
\label{sec: privacy utility}

\begin{table}
    \centering
    \resizebox{0.95\columnwidth}{!}{
    \begin{tabular}{cccc}
    \toprule
    Privacy budget $\varepsilon$ & Movies (ACC) & RedditS (AUC) & Toys (R@1) \\
    \midrule
    $\infty$ (w/o DP) & \textbf{54.27$\pm$0.25} & \textbf{97.84$\pm$0.18} & \textbf{53.64$\pm$0.36} \\ 
    16 & 54.14$\pm$0.28 & 97.76$\pm$0.20 & 53.56$\pm$0.39 \\ 
    8 & 53.98$\pm$0.30 & 97.61$\pm$0.22 & 53.42$\pm$0.42 \\ 
    4 & 53.73$\pm$0.34 & 97.34$\pm$0.26 & 53.14$\pm$0.46 \\ 
    2 & 53.31$\pm$0.37 & 96.93$\pm$0.29 & 52.78$\pm$0.50 \\ 
    1 & 52.89$\pm$0.43 & 96.34$\pm$0.35 & 52.25$\pm$0.55 \\
    \bottomrule
    \end{tabular}
    }
    \captionof{table}{Performance under $(\varepsilon, \delta)$-DP prototype perturbation with $\delta=10^{-5}$ and clipping bound $C=1.0$, mean and standard deviation over five random seeds (percentages).}
    \label{tab: privacy}
\end{table}

To answer \textbf{Q6}, we evaluate FedTCR with $(\varepsilon, \delta)$-differentially private prototype uploading (Eq.~\eqref{eq: dp clip}), fixing $\delta=10^{-5}$ and $C=1.0$ while varying $\varepsilon$. As shown in Table~\ref{tab: privacy}, performance is nearly lossless at $\varepsilon \ge 8$ (over 99.4\% retained), since prototypes are importance-weighted aggregates over many nodes, which limits per-client sensitivity, and the server-side aggregation further averages the perturbations out across clients. Even at $\varepsilon=1$, FedTCR retains over 97.4\% of its non-private performance, staying clearly ahead of the strongest non-private baseline on RedditS and Toys and within 1.1\% of MMGCN on Movies.

\section{Conclusion}
\label{sec: Conclusion}

We identify the \textit{multifaceted heterogeneity} inherent in federated multimodal graph learning (FMGL), spanning task, modality, and topology heterogeneity across decentralized MAGs. We propose FedTCR, the first systematic algorithm for FMGL, which employs a two-stage paradigm (federated task-agnostic pre-training followed by isolated task-oriented fine-tuning) and a topology-aware cross-modal routing mechanism that distills modality-specific knowledge into structure-informed prototypes for cross-client contrastive alignment. Experiments across 8 datasets show that FedTCR consistently achieves state-of-the-art performance on graph-centric and modality-centric tasks, laying a foundation for privacy-preserving multimodal graph learning.

\bibliography{citation}
\cleardoublepage

\appendix
\newpage

\section{More Related Works}

\paragraph{Graph Neural Networks (GNNs).} Earlier research on deep graph learning extends convolution to handle graphs~\cite{bruna2013spectral} but comes with notable parameter counts. To this end, GCN~\cite{gcn} simplifies graph convolution by utilizing a 1-order Chebyshev filter to capture local neighborhood information. Moreover, GAT~\cite{gat} adopts graph attention, allowing weighted aggregation. GraphSAGE~\cite{graphsage} introduces a variety of learnable aggregation functions for performing message aggregation. Moreover, GIN~\cite{xu2018gin} aims to preserve structural information maximally and theoretically proves its discriminative power matches the Weisfeiler-Lehman graph isomorphism test. Further details on GNN research can be found in surveys~\cite{wu2020comprehensive,zhou2020graph}.

\paragraph{Multimodal Federated Learning (MFL).} MFL studies federated optimization over multimodal data. PEPSY~\cite{pepsy} handles both missing modalities and missing input features by learning client-side data-missing profiles for robust aggregation. FedMVP~\cite{fedmvp} leverages frozen pre-trained foundation models for cross-modal completion and representation knowledge transfer, with CKA-based importance-aware aggregation. FedMAC~\cite{fedmac} aligns heterogeneous modality representations via cross-modal contrastive learning with modality-specific adaptive aggregation. These methods address modality missingness at the data level, but they are not designed for graph-structured multimodal data and assume a uniform downstream task across clients.

\section{Additional Experimental Details}
\label{appendix: more experimental setups}

\subsection{Training Protocol}
\label{appendix: training protocol}

Unless otherwise specified, all reported results are presented as mean$\pm$standard deviation over five random seeds. For each seed, all methods share the same data split and, for robustness experiments, the same realization of sparsity patterns. Baseline implementations use the official released code with the recommended hyperparameter settings, while ablated variants inherit the same configuration with only the corresponding component removed. Federated pre-training runs for $T=200$ communication rounds with full client participation and $E=2$ local epochs per round; task-oriented fine-tuning runs for 100 epochs per client.

\paragraph{Perturbation model.}
For the robustness study (Sec.~\ref{sec: robustness}), three types of sparsity are injected before training with ratio $\alpha$, and for a given seed every method receives the identical realization. \textit{Feature sparsity} randomly masks $\alpha$\% of the feature dimensions of every node in the target modality, setting them to zero. \textit{Topology sparsity} removes $\alpha$\% of the existing edges uniformly at random. \textit{Label sparsity} removes $\alpha$\% of the available node labels for fine-tuning, sampled uniformly per client.

\paragraph{Task definitions.}
We consider two graph-centric tasks: (1) \textit{Node Classification}: assigning category labels to unlabeled nodes; (2) \textit{Link Prediction}: estimating whether an edge $(u,v)$ should exist in $\mathcal{E}^k$; and two modality-centric tasks: (1) \textit{Modality Retrieval}: given a query from one modality (e.g., text), retrieving its corresponding representation in another modality; (2) \textit{Modality Generation}: producing content in a target modality conditioned on a target node along with task instructions and its graph neighborhood, covering Graph-to-Text (G2Text) generation of textual descriptions and Graph-to-Image (G2Image) synthesis of visual content.

\subsection{Model Configuration}
\label{appendix: model configuration}

Each node is associated with text and image features extracted from frozen pre-trained encoders (Sentence-BERT~\cite{sentence_bert} or T5~\cite{t5} for text, ViT~\cite{vit} or DINOv2~\cite{dinov2} for images; Qwen2-VL-7B-Instruct~\cite{bai2023qwen} is used for the G2Text/G2Image feature extraction), producing 768-dimensional representations for all datasets. Each modality is encoded by a linear projection (Eq.~\eqref{eq: modality projection}) followed by low-pass graph filtering (Eq.~\eqref{eq: graph filter}).

Unless otherwise stated, the default hyperparameters are shared across all datasets: the latent dimension is $d=128$; the propagation depth is $L=3$; the smoothing strength is $\alpha=2.0$ in Eq.~\eqref{eq: graph filter}; the PageRank damping factor is $0.85$ with 20 power iterations; the contrastive temperature is $\tau=0.5$; the loss weights are $\lambda_m=0.3$, $\lambda_g=0.2$, and $\lambda_c=0.5$, which are normalized to sum to $1$; and the differential-privacy configuration is $\delta=10^{-5}$ with clip bound $C=1.0$ ($\varepsilon=\infty$ unless the privacy budget is explicitly swept in Sec.~\ref{sec: privacy utility}). For the neighbor-level contrastive loss, we simulate 5 random walks of length 10 per node and sample an equal number of negatives. A sensitivity analysis of the main hyperparameters is reported in Sec.~\ref{sec: sensitivity analysis}.

\paragraph{Optimization.}
All models are optimized using Adam with learning rate $10^{-3}$ and weight decay $10^{-5}$ under full-batch training. Encoder parameters are aggregated at the server by node-count-weighted averaging, and no server-side model is maintained for the routing outputs.

\section{Efficiency Analysis}
\label{appendix: more experiments}

We analyze the computational and communication complexity of FedTCR per federated round. On the client side, multimodal graph encoding costs $\mathcal{O}(|\mathcal{M}| \cdot |\mathcal{V}^k| d_m d)$ for the modality projections and $\mathcal{O}(L \cdot |\mathcal{E}^k| d)$ for the low-pass graph filtering with sparse adjacency multiplications. The topology-aware weights are computed via PageRank power iteration at $\mathcal{O}(|\mathcal{E}^k|)$ per iteration, and only once as a preprocessing step before training. Prototype construction costs $\mathcal{O}(|\bar{\mathcal{M}}| \cdot |\mathcal{V}^k| d)$, and the three contrastive losses are computed over per-client sampled node pairs, sharing the same order as standard cross-contrastive multimodal training. On the server side, parameter aggregation costs $\mathcal{O}(|\mathcal{S}| \cdot |\bar{\mathcal{M}}| d_m d)$, while cross-modal routing requires pairwise cosine similarities among $|\mathcal{S}| \times |\bar{\mathcal{M}}|$ prototypes, i.e., $\mathcal{O}(|\mathcal{S}|^2 |\bar{\mathcal{M}}|^2 d)$, which is negligible compared with client-side training since each prototype is a single $d$-dimensional vector. In terms of communication, each client uploads the shared encoder parameters $\{\mathbf{\Phi}^{(m)}\}_{m \in \mathcal{M}}$, identical to standard FL methods such as FedAvg, plus $|\bar{\mathcal{M}}|$ prototype vectors, a marginal overhead of $|\bar{\mathcal{M}}| \cdot d$ values per round. The isolated fine-tuning stage involves no further communication. Overall, FedTCR introduces only lightweight overhead over classical FL methods while enabling cross-task and cross-modal collaboration.

\section{Dataset Details}
\label{appendix: dataset details}

Detailed statistical information on the datasets is presented in Table~\ref{table: datasets}, where the graph statistics follow the OpenMAG construction~\cite{openmag}, with textual descriptions as follows.
\begin{table}[htbp]
\centering
\resizebox{\columnwidth}{!}{
\begin{tabular}{lccccc}
\toprule
Datasets     & \# Modalities & \# Nodes & \# Edges   & \# Classes  & Domain    \\
\midrule
Movies       & Text, Image  & 16,672  & 218,390   & 20               & Movie Network  \\
Grocery      & Text, Image  & 17,074  & 171,340   & 20               & Recommendation \\
RedditS      & Text, Image  & 15,894  & 566,160   & 20               & Social Network \\
BiliDance    & Text, Image  & 2,307   & 9,127     & -                & Entertainment  \\
Toys         & Text, Image  & 20,695  & 126,886   & 18               & Recommendation \\
ele-fashion  & Text, Image  & 97,766  & 199,602   & 12               & Co-purchase Network \\
Flickr30k    & Text, Image  & 31,783  & 181,151   & -                & Image Network  \\
SemArt       & Text, Image  & 21,382  & 1,216,432 & -                & Artwork        \\
\bottomrule
\end{tabular}
}
\caption{Statistics of the experimental datasets.}
\label{table: datasets}
\end{table}

\paragraph{Movies}~\cite{ni2019justifying}
    is sourced from Amazon's Movies and TV category.
    Nodes correspond to DVD/Blu-ray products, and edges reflect consumer co-purchasing behavior.
    Node attributes include textual plot synopses and customer reviews, alongside visual features derived from official cover art.

\paragraph{Grocery}~\cite{ni2019justifying, hou2024bridging}
    is sourced from Amazon's Grocery and Gourmet Food category.
    Nodes correspond to food and household products, and edges reflect co-purchasing behavior.
    Node attributes include textual product descriptions and visual features from product images.
    This dataset is used for node classification, where products are categorized into fine-grained grocery categories.

\paragraph{RedditS}~\cite{RedditS}
    is a social network derived from Reddit, where nodes correspond to posts linked by user interaction and community relations.
    Each node carries textual post content and associated visual context.
    This dataset is utilized for link prediction, forecasting potential interaction links between posts.

\paragraph{BiliDance}~\cite{zhang2024ninerec}
    is a video entertainment network collected from the Bilibili platform, where nodes correspond to dance-video content items linked by user co-viewing behavior.
    Nodes carry textual descriptions and visual cover features.
    This dataset is utilized for link prediction over content-association structures.

\paragraph{Toys}~\cite{ni2019justifying, hou2024bridging}
    is sourced from Amazon's Toys and Games category.
    Nodes correspond to toy products, and edges reflect co-purchasing behavior.
    Node attributes include textual product descriptions and visual features from product images.
    This dataset is utilized for modality retrieval between textual and visual product content.

\paragraph{ele-fashion}~\cite{ni2019justifying, hou2024bridging}
    is a heterogeneous graph merging Amazon's Electronics and Fashion categories. Nodes are connected via cross-category co-purchasing links, revealing latent consumer preferences across disparate domains.
    Features combine technical specs with style descriptions and product imagery.
    This dataset is utilized for modality retrieval between textual and visual product content.

\paragraph{Flickr30k}~\cite{plummer2015_Flickr30k}
    is a canonical image-text reasoning dataset.
    In the OpenMAG setting~\cite{openmag}, we construct a graph where nodes represent image regions and caption phrases, linked by semantic grounding annotations.
    This dataset is utilized for Graph-to-Text (G2Text) generation, evaluating the model's ability to generate descriptive captions by traversing grounded visual-textual relationships.

\paragraph{SemArt}~\cite{garcia2018_SemArt}
    is an artwork understanding dataset of fine-art paintings, where nodes correspond to artworks linked by artistic, historical, and semantic relations.
    Nodes carry visual painting features and textual descriptions of their content and context.
    This dataset is utilized for Graph-to-Image (G2Image) synthesis, producing visual content conditioned on multimodal graph context.

\section{Baseline Details}
\label{appendix: baseline details}

\subsection{Isolated Training}

    \paragraph{MMGCN}~\cite{mmgcn} introduces a multimodal framework for micro-video recommendation by modeling user preferences across visual, acoustic, and textual channels.
    It builds separate modality-specific bipartite graphs, captures high-order interactions within each, and fuses them via a structured integration layer, effectively reflecting user-item dynamics in each sensory modality.
    
    \paragraph{MGAT}~\cite{mgat} applies gated attention over parallel multimodal interaction graphs for personalized recommendation.
    By adaptively weighting different modalities, it disentangles fine-grained user interests and filters out noisy or conflicting signals, enhancing preference modeling robustness.
    
    \paragraph{LGMRec}~\cite{lgmrec} is a multimodal recommender that models both local and global user interests through graph learning. It separates collaborative and multimodal signals in user embeddings and addresses sparsity in local interest modeling. A local graph embedding module learns collaborative and modality-specific embeddings independently, while a global hypergraph module captures overall dependencies among users and items. Combining these decoupled local and global embeddings improves recommendation accuracy and robustness.

\subsection{Classical FL Methods}

\paragraph{FedAvg.}~\cite{fedavg} serves as a foundational method in FL, enabling decentralized model training across diverse devices while preserving data privacy. 
    Initiated by a central server that distributes a global model, clients independently execute local updates through stochastic gradient descent. 
    Subsequently, these updates are aggregated by the server via averaging to refine the global model, with the cycle repeating until convergence.

\paragraph{SCAFFOLD.}~\cite{scaffold} employs control variates to mitigate client-drift in FL. 
    Demonstrating significant reductions in communication rounds, Scaffold is resilient to data heterogeneity and client sampling. 

\paragraph{FedDC.}~\cite{feddc} is a novel FL algorithm that corrects local drift through lightweight modifications. Each client tracks the deviation between local and global model parameters using an auxiliary variable, enhancing parameter-level consistency.

\subsection{Multimodal FL Methods}

\paragraph{PEPSY}~\cite{pepsy} addresses multimodal federated learning with both missing modalities and missing input features by learning client-side data-missing profiles that encode local missing patterns as embedding controls. These profiles are probabilistically aligned and aggregated on the server to reconfigure shared representations toward each client’s incomplete data view, enabling robust aggregation and stable performance under severe data incompleteness.

\paragraph{FedMVP}~\cite{fedmvp} addresses modality missing in multimodal federated learning by leveraging frozen pre-trained foundation models for cross-modal completion and representation knowledge transfer. It trains a lightweight joint encoder via multimodal contrastive objectives and performs CKA-based importance-aware aggregation on the server, achieving robust performance under severe modality incompleteness. 

\paragraph{FedMAC}~\cite{fedmac} addresses cross-modal heterogeneity in multimodal federated learning by introducing a modality-aware collaboration framework that aligns heterogeneous modality representations via cross-modal contrastive learning. It further employs modality-specific aggregation with adaptive weighting to mitigate negative transfer across clients with inconsistent modality availability.

\subsection{Unimodal FGL Methods}

\paragraph{FedSage+}~\cite{fedsage_plus} extends FedSage to the subgraph federated learning setting by explicitly addressing missing cross-client neighbors. It jointly trains a GraphSAGE classifier with a local missing-neighbor generator that synthesizes potential cross-subgraph neighbors, enabling more complete neighborhood aggregation under federation and improving global generalization without sharing raw graph data.

\paragraph{FedGTA}~\cite{fedgta} innovatively merges large-scale graph learning with federated graph learning. Clients encode topology and node attributes, compute local smoothing confidence and mixed moments of neighbor features, and then upload these to the server. The server uses this data to perform personalized model aggregation, utilizing local smoothing confidence as weights for effective integration.

\paragraph{FedSPA}~\cite{fedspa} addresses homophily heterogeneity in federated graph learning by explicitly modeling both homophily conflict and homophily bias across clients. It introduces Subgraph Feature Propagation Decoupling (SFPD) to separate homophilic and heterophilic message passing, enabling collaboration under unified homophily levels, and proposes Homophily Bias-Driven Aggregation (HBDA) to adaptively weight client contributions based on spectral and parameter-sensitivity cues, thereby improving global generalization.

\paragraph{FedIIH}~\cite{fediih} addresses heterogeneity in federated graph learning by jointly modeling inter-client and intra-client heterogeneity. It infers subgraph distribution similarities via a hierarchical variational framework from a global perspective, while disentangling local subgraphs into multiple latent factors to enable factor-wise personalized federation, leading to robust collaboration across both homophilic and heterophilic graphs.

\paragraph{FedSSP}~\cite{fedssp} proposes a personalized federated graph learning framework that addresses cross-domain structural heterogeneity from a spectral perspective. It shares generic spectral knowledge across clients to mitigate knowledge conflict induced by domain shifts, while retaining client-specific components to preserve personalization.

\paragraph{S2FGL}~\cite{s2fgl} tackles subgraph federated graph learning by jointly addressing spatial label-signal disruption and spectral client drift. It reinforces missing label semantics via prototype-based semantic sharing and aligns graph-frequency components across clients to improve robustness and generalization under heterogeneous subgraph distributions.

\paragraph{FedLap}~\cite{fedlap} leverages global graph structure in subgraph federated learning via Laplacian smoothing, introducing a structural regularization term that implicitly enforces representation consistency among neighboring nodes without explicit message passing or feature sharing. By operating in the spectral domain, FedLap captures inter-subgraph dependencies while achieving strong privacy guarantees and low communication overhead.

\section{Theoretical Proofs}
\label{appendix: proofs}

\begin{theorem}[Unified view of propagation as graph filtering]
\label{thm: smooth fused rep}
Let $\tilde{\mathbf{A}} \in \mathbb{R}^{n \times n}$ be the symmetrically normalized adjacency matrix of an undirected graph and $\mathbf{Z} \in \mathbb{R}^{n \times d}$ be node representations. Consider the general depth-$L$ polynomial graph filter, which can be formulated as follows:
\begin{equation}
\label{eq: general filter}
\mathbf{H} = \Big(\sum_{l=0}^{L} \gamma_l \tilde{\mathbf{A}}^{l}\Big) \mathbf{Z}, \qquad \gamma_l \ge 0,\ \sum_{l=0}^{L} \gamma_l = 1.
\end{equation}
Then: (i) SGC and the linearized $L$-layer GCN correspond to the one-hot choice $\gamma_L = 1$; (ii) APPNP-style personalized-PageRank propagation corresponds to the geometric choice $\gamma_l = (1-\beta)\,\beta^{l}$ with $\beta \in (0,1)$ and $L \to \infty$; (iii) the FedTCR filter of Eq.~\eqref{eq: graph filter} is exactly the geometric choice with $\beta = \alpha/(\alpha+1)$, i.e., the finite-$L$ truncation of (ii), whose approximation error decays exponentially:
\begin{equation}
\label{eq: truncation bound}
\big\|\mathbf{H}_L - \mathbf{H}_\infty\big\|_F \;\le\; \beta^{L+1}\,\|\mathbf{Z}\|_F .
\end{equation}
\end{theorem}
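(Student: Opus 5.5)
The plan is to reduce all three claims to two facts about $\tilde{\mathbf{A}}$. First, $\tilde{\mathbf{A}}$ is symmetric. Second, its spectral norm satisfies $\|\tilde{\mathbf{A}}\|_2 \le 1$.

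I would establish the spectral bound first, since everything downstream uses it. Because $\tilde{\mathbf{A}} = \mathbf{D}^{-1/2}\mathbf{A}\mathbf{D}^{-1/2}$ is similar to the row-stochastic matrix $\mathbf{D}^{-1}\mathbf{A}$, its eigenvalues are those of a stochastic matrix and lie in $[-1,1]$. Since $\tilde{\mathbf{A}}$ is symmetric, its spectral norm equals its spectral radius, giving $\|\tilde{\mathbf{A}}\|_2\le 1$. Isolated nodes need care: I would use the convention that the corresponding rows and columns of $\mathbf{D}^{-1/2}$ are zero, which only adds zero eigenvalues. As a consequence, $\|\tilde{\mathbf{A}}^l\mathbf{Z}\|_F \le \|\tilde{\mathbf{A}}\|_2^l\|\mathbf{Z}\|_F \le \|\mathbf{Z}\|_F$ for all $l \ge 0$. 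I expect this step to be the main (if modest) obstacle, because of the degenerate-degree convention. If self-loops are used as in GCN's renormalization trick, the same argument applies verbatim with $\mathbf{A}+\mathbf{I}$ in place of $\mathbf{A}$.

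For (i), I would write SGC as $\tilde{\mathbf{A}}^L\mathbf{X}\mathbf{W}$. For the linearized $L$-layer GCN, I would drop the nonlinearities and collapse the weights to $\tilde{\mathbf{A}}^L\mathbf{X}\mathbf{W}_1\cdots\mathbf{W}_L$. Identifying $\mathbf{Z}$ with the projected features ($\mathbf{X}\mathbf{W}$, resp. $\mathbf{X}\prod_i\mathbf{W}_i$) shows that both are Eq.~\eqref{eq: general filter} with $\gamma_L=1$ and all other $\gamma_l=0$.

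For (ii), I would start from the APPNP fixed point $\mathbf{H} = (1-\beta)\mathbf{Z} + \beta\tilde{\mathbf{A}}\mathbf{H}$, where $1-\beta$ is the teleport probability. Since $\|\beta\tilde{\mathbf{A}}\|_2<1$, the matrix $\mathbf{I}-\beta\tilde{\mathbf{A}}$ is invertible and has a convergent Neumann series. This yields $\mathbf{H}_\infty=(1-\beta)\sum_{l\ge0}\beta^l\tilde{\mathbf{A}}^l\mathbf{Z}$. The weights are nonnegative and sum to one, so this is Eq.~\eqref{eq: general filter} with $\gamma_l=(1-\beta)\beta^l$ in the limit $L\to\infty$.

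For (iii), I would substitute $\beta=\alpha/(\alpha+1)$ into Eq.~\eqref{eq: graph filter}. Then $1/(\alpha+1)=1-\beta$, and the $l$-th term is $(1-\beta)\beta^l\tilde{\mathbf{A}}^l\bar{\mathbf{Z}}$, so Eq.~\eqref{eq: graph filter} is exactly the depth-$L$ truncation $\mathbf{H}_L$ of (ii). One caveat should be stated explicitly: the truncated weights sum to $1-\beta^{L+1}$ rather than $1$, so the normalization in Eq.~\eqref{eq: general filter} holds only in the limit (or after renormalizing). That missing mass is precisely what the bound controls.

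For the bound, I would write $\mathbf{H}_\infty-\mathbf{H}_L=(1-\beta)\sum_{l>L}\beta^l\tilde{\mathbf{A}}^l\mathbf{Z}$. Applying the triangle inequality and the consequence of the spectral bound gives at most $(1-\beta)\|\mathbf{Z}\|_F\sum_{l>L}\beta^l$. The geometric tail equals $\beta^{L+1}/(1-\beta)$, so the difference is at most $\beta^{L+1}\|\mathbf{Z}\|_F$, which is Eq.~\eqref{eq: truncation bound}.
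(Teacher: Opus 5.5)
Your proposal is correct and follows the paper's proof essentially step for step. The shared steps are the spectral bound $\|\tilde{\mathbf{A}}\|_2\le 1$, the Neumann-series identification of the APPNP/personalized-PageRank filter, the coefficient substitution $\beta=\alpha/(\alpha+1)$, and the geometric-tail estimate; your similarity-to-$\mathbf{D}^{-1}\mathbf{A}$ argument just fills in the spectral claim the paper asserts without justification. Your caveat is also valid and goes slightly beyond the paper's proof: the truncated weights sum to $1-\beta^{L+1}$, so the FedTCR filter meets the normalization in Eq.~\eqref{eq: general filter} only in the limit $L\to\infty$ (or after renormalizing), a point the paper passes over.
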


\begin{proof}
Since $\tilde{\mathbf{A}}$ is symmetric with spectrum in $[-1, 1]$, we have $\|\tilde{\mathbf{A}}\|_2 \le 1$, so for any $\beta \in (0,1)$ the Neumann series $\sum_{l=0}^{\infty} (\beta \tilde{\mathbf{A}})^l$ converges to $(\mathbf{I} - \beta \tilde{\mathbf{A}})^{-1}$.

(i) With $\gamma_L = 1$ and all other coefficients zero, Eq.~\eqref{eq: general filter} reduces to $\mathbf{H} = \tilde{\mathbf{A}}^{L} \mathbf{Z}$, which is precisely the propagation operator of SGC; an $L$-layer GCN without intermediate nonlinearities and with shared collapsed weights realizes the same operator as a special case of the general filter.

(ii) With $\gamma_l = (1-\beta)\beta^{l}$ and $L \to \infty$, the coefficient sequence sums to one, and
$\sum_{l=0}^{\infty} (1-\beta)(\beta \tilde{\mathbf{A}})^l = (1-\beta)(\mathbf{I} - \beta \tilde{\mathbf{A}})^{-1}$,
which is the personalized-PageRank propagation matrix with teleport probability $1-\beta$, i.e., the exact solution that APPNP approximates with a finite number of power-iteration steps as its propagation matrix.

(iii) Setting $\beta = \alpha/(\alpha+1)$ gives $1 - \beta = 1/(\alpha+1)$, so the coefficients of Eq.~\eqref{eq: graph filter} satisfy $\gamma_l = \frac{1}{\alpha+1}\big(\frac{\alpha}{\alpha+1}\big)^{l} = (1-\beta)\beta^{l}$, i.e., FedTCR's filter is the finite-$L$ truncation of (ii). For the error, writing $\mathbf{H}_\infty = (1-\beta)\sum_{l=0}^{\infty}(\beta\tilde{\mathbf{A}})^l \mathbf{Z}$,
\begin{equation*}
\begin{aligned}
\|\mathbf{H}_L - \mathbf{H}_\infty\|_F
&\le (1-\beta) \sum_{l=L+1}^{\infty} \beta^{l} \|\tilde{\mathbf{A}}\|_2^{l}\, \|\mathbf{Z}\|_F \\
&\le (1-\beta)\,\frac{\beta^{L+1}}{1-\beta}\,\|\mathbf{Z}\|_F
= \beta^{L+1} \|\mathbf{Z}\|_F ,
\end{aligned}
\end{equation*}
where the first inequality applies submultiplicativity of the Frobenius norm and the second uses $\|\tilde{\mathbf{A}}\|_2 \le 1$.
\end{proof}

\section{Metric Details}
\label{appendix: metric details}

All metrics are computed on each client's held-out split after isolated task-oriented fine-tuning, and are reported as mean$\pm$standard deviation over five random seeds (Appendix~\ref{appendix: training protocol}). Task definitions follow Appendix~\ref{appendix: more experimental setups}.

\paragraph{Node Classification.} We report classification accuracy (ACC):
\begin{equation}
    \text{ACC} = \frac{1}{N} \sum_{i=1}^{N} \mathbb{I}(\hat{y}_i = y_i),
\end{equation}
where $N$ is the number of test nodes, $y_i$ the ground-truth label, $\hat{y}_i$ the prediction, and $\mathbb{I}(\cdot)$ the indicator function.

\paragraph{Link Prediction.} We report the Area Under the ROC Curve (AUC), which measures the probability that a randomly chosen positive (existent) edge receives a higher score than a randomly chosen negative (non-existent) node pair:
\begin{equation}
    \text{AUC} = P\big(s(u, v) > s(u', v')\big),
\end{equation}
where $s(\cdot,\cdot)$ is the model's similarity score, $(u,v)$ is a positive edge, and $(u',v')$ is a sampled negative pair.

\paragraph{Modality Retrieval.} We report Recall@1 (R@1), the fraction of queries whose ground-truth cross-modal target is ranked first:
\begin{equation}
    \text{R@1} = \frac{1}{|Q|} \sum_{i=1}^{|Q|} \mathbb{I}(\text{rank}_i = 1),
\end{equation}
where $Q$ is the query set and $\text{rank}_i$ is the rank of the correct target for query $i$.

\paragraph{Graph-to-Text (G2Text).} Following MMGL~\cite{yoon2023mmgl}, we report ROUGE-L (R-L), which measures sentence-level recall based on the longest common subsequence between the generated description and the reference:
\begin{equation}
    \text{ROUGE-L} = \frac{(1 + \beta^2) R_{lcs} P_{lcs}}{R_{lcs} + \beta^2 P_{lcs}}.
\end{equation}

\paragraph{Graph-to-Image (G2Image).} Following InstructG2I~\cite{instructg2i}, we report CLIP-Score (C-S), which quantifies cross-modal semantic consistency between the generated image and its conditioning description via pre-trained CLIP encoders ($E_T$ for text and $E_I$ for images):
\begin{equation}
    \text{CLIP-Score}(I, T) = \max\left(100 \cdot \cos(E_I(I), E_T(T)), 0\right).
\end{equation}

\section{Limitations}
\label{appendix: lim}

 While FedTCR demonstrates strong empirical performance across diverse FMGL scenarios, several limitations warrant discussion. First, each client distills its modality-specific knowledge into a single prototype vector per modality, which may not fully capture the internal multi-community structure of large-scale MAGs with diverse semantic clusters. Extending to multiple prototypes per modality (e.g., cluster-aware prototypes) could improve expressiveness at the cost of increased communication overhead. Second, our evaluation focuses on MAGs with text and image modalities; the generalizability to other modality types (e.g., audio, video, tabular data) remains to be validated. Third, the two-stage paradigm assumes that the pre-training phase has sufficiently converged before fine-tuning begins, and the potential gap between the task-agnostic objective and specific downstream tasks may limit the adaptability for highly specialized applications. Exploring progressive or continual pre-training strategies could mitigate this issue.

\section{Environment}
\label{appendix: environment}

All experiments are conducted on a workstation equipped with Intel Xeon Scalable processors and NVIDIA RTX 6000 Ada Generation GPUs with 96 GB of VRAM, supported by 256 GB of system RAM. The computational environment utilizes CUDA 12.9, while software implementations are developed using Python 3.10.18 and PyTorch 2.8.

\section{Pseudocode}
\label{appendix: pseudocode}

The complete procedure of FedTCR is presented in Algorithm~\ref{alg: fedtcr}, which consists of two stages: federated task-agnostic pre-training and isolated task-oriented fine-tuning.

\normalem
\begin{algorithm*}[t]
\caption{FedTCR: FMGL with Topology-aware Cross-modal Routing}
\label{alg: fedtcr}
\KwInput{$K$ clients with MAGs $\{\mathcal{G}^k\}_{k=1}^K$; modality set $\mathcal{M}$; extended modality set $\bar{\mathcal{M}} = \mathcal{M} \cup \{|\mathcal{M}|+1\}$; communication rounds $T$; local epochs $E$; smoothing strength $\alpha$; propagation depth $L$; temperature $\tau$; loss weights $\lambda_m, \lambda_g, \lambda_c$}
\KwOutput{Task-specific predictions for each client}
\tcc{Stage 1: Federated Task-agnostic Pre-training}
Initialize modality-specific encoders $\{\mathbf{\Phi}^{(m)}\}_{m \in \mathcal{M}}$\;
\For{ each client $k \in [K]$ \textbf{in parallel}}{
    Compute PageRank importance weights $\{\omega_i^k\}_{i \in \mathcal{V}^k}$ via power iteration (Eq.~\eqref{eq: node importance})\;
    Simulate random walks to obtain positive neighbor sets $\{\mathcal{P}_i^k\}_{i \in \mathcal{V}^k}$ and sample negative sets $\{\mathcal{N}_i^k\}_{i \in \mathcal{V}^k}$\;
}
\For{each round $t = 1, 2, \ldots, T$}{
    Server distributes global encoders $\{\mathbf{\Phi}^{(m)}\}_{m \in \mathcal{M}}$ to all clients\;
    \lIf{$t \geq 2$}{Server distributes routed prototypes $\{(\mathbf{P}_{\mathrm{pos}}^{k,(i,j)}, \mathbf{P}_{\mathrm{neg}}^{k,(i,j)})\}_{i,j}$ to each client $k$}
    \tcc{Client-side Local Training}
    \For{ each client $k \in [K]$ \textbf{in parallel}}{
        \For{each local epoch $e = 1, 2, \ldots, E$}{
            \tcc{Multimodal Graph Encoding (Eq.~\eqref{eq: modality projection}, \eqref{eq: graph filter})}
            $\mathbf{Z}^{k,(m)} \leftarrow \mathrm{L2Norm}(\mathbf{X}^{k,(m)} \mathbf{\Phi}^{(m)})$, $\forall m \in \mathcal{M}$\;
            $\bar{\mathbf{Z}}^k \leftarrow \frac{1}{|\mathcal{M}|}\sum_{m \in \mathcal{M}} \mathbf{Z}^{k,(m)}$\;
            $\mathbf{H}^k \leftarrow \frac{1}{\alpha+1}\sum_{l=0}^{L}\left(\frac{\alpha}{\alpha+1}\tilde{\mathbf{A}}^k\right)^l \bar{\mathbf{Z}}^k$\;
            \tcc{Topology-aware Prototype Construction (Eq.~\eqref{eq: prototype construction})}
            $\mathbf{P}^{k,(m)} \leftarrow \sum_{i \in \mathcal{V}^k} \omega_i^k \cdot \mathbf{z}_i^{k,(m)}$, $\forall m \in \bar{\mathcal{M}}$\;
            \tcc{Local Optimization (Eq.~\eqref{eq: modal alignment}, \eqref{eq: graph contrastive}, \eqref{eq: cross client loss})}
            Compute node-level contrastive loss $\mathcal{L}_m$\;
            Compute neighbor-level contrastive loss $\mathcal{L}_g$\;
            \lIf{$t \geq 2$}{Compute client-level contrastive loss $\mathcal{L}_c$ using routed prototypes}
            Update $\{\mathbf{\Phi}^{(m)}\}_{m \in \mathcal{M}}$ via $\nabla(\lambda_m \mathcal{L}_m + \lambda_g \mathcal{L}_g + \lambda_c \mathcal{L}_c)$ (Eq.~\eqref{eq: total loss})\;
        }
        Upload updated encoders $\{\mathbf{\Phi}^{k,(m)}\}_{m \in \mathcal{M}}$ and final prototypes $\{\mathbf{P}^{k,(m)}\}_{m \in \bar{\mathcal{M}}}$ to server\;
    }
    \tcc{Server-side Aggregation and Routing}
    Aggregate encoders: $\mathbf{\Phi}^{(m)} \leftarrow \sum_{k=1}^{K} \frac{|\mathcal{V}^k|}{N} \mathbf{\Phi}^{k,(m)}$, $\forall m \in \mathcal{M}$\;
    \tcc{Cross-modal Prototype Routing (Eq.~\eqref{eq: prototype routing})}
    \For{ each client $a \in [K]$}{
        \For{ each modality pair $(i, j),\; i \in \bar{\mathcal{M}},\; j \in \bar{\mathcal{M}} \setminus \{i\}$}{
            $\mathbf{P}_{\mathrm{pos}}^{a,(i,j)} \leftarrow \hat{\mathbf{P}}^{b^*,(j)}$, where $b^* = \argmax_{b \neq a} \cos(\hat{\mathbf{P}}^{a,(i)}, \hat{\mathbf{P}}^{b,(j)})$\;
            $\mathbf{P}_{\mathrm{neg}}^{a,(i,j)} \leftarrow \hat{\mathbf{P}}^{b',(j)}$, where $b' = \argmin_{b \neq a} \cos(\hat{\mathbf{P}}^{a,(i)}, \hat{\mathbf{P}}^{b,(j)})$\;
        }
    }
}
\tcc{Stage 2: Isolated Task-oriented Fine-tuning}
\For{ each client $k \in [K]$ \textbf{in parallel}}{
    Attach task-specific head $g^k$ to the pre-trained encoder\;
    Fine-tune $\{\mathbf{\Phi}^{(m)}\}_{m \in \mathcal{M}}$ and $g^k$ on local MAG $\mathcal{G}^k$ with task-specific loss $\mathcal{L}_{\mathrm{task}}$\;
}
\end{algorithm*}
\ULforem

\end{document}